\documentclass[10pt,conference]{IEEEtran}
\IEEEoverridecommandlockouts
\usepackage{cite}
\usepackage{amsmath}
\usepackage{amsfonts}
\usepackage{bm}  
\usepackage{graphicx}
\usepackage{hyperref}
\usepackage{amsthm} 
\usepackage{authblk} 
\usepackage{graphicx, amsmath, amsthm, amssymb, subcaption, url, cite, array, amsthm,booktabs,xcolor}
\newtheorem{theorem}{Theorem}

\usepackage{lipsum} 
\newtheorem{assumption}{Assumption}

\newtheorem{proposition}{Proposition}

\IEEEaftertitletext{\vspace{-1.2\baselineskip}} 
\begin{document}
\title{\huge  Spiking Personalized Federated Learning for Brain-Computer Interface-Enabled Immersive Communication
}
\author[1]{Chen Shang}
\author[1]{Dinh Thai Hoang}
\author[1]{Diep N. Nguyen}
\author[2]{Jiadong Yu}
\affil[1]{School of Electrical and Data Engineering, University of Technology Sydney, Australia}
\affil[2]{Thrust of Internet of Things, The Hong Kong University of Science and Technology (Guangzhou), China}
\maketitle
\vspace{-20pt}

\begin{abstract}
This work proposes a novel immersive communication framework that leverages brain-computer interface (BCI) to acquire brain signals for inferring user-centric states (e.g., intention and perception-related discomfort), thereby enabling more personalized and robust immersive adaptation under strong individual variability. Specifically, we develop a personalized federated learning (PFL) model to analyze and process the collected brain signals, which not only accommodates neurodiverse brain-signal data but also prevents the leakage of sensitive brain-signal information.
To address the energy bottleneck of continual on-device learning and inference on energy-limited immersive terminals (e.g., head-mounted display), we further embed spiking neural networks (SNNs) into the PFL. By exploiting sparse, event-driven spike computation, the SNN-enabled PFL reduces the computation and energy cost of training and inference while maintaining competitive personalization performance. Experiments on real brain-signal dataset demonstrate that our method achieves the best overall identification accuracy while reducing inference energy by 6.46$\times$ compared with conventional artificial neural network-based personalized baselines.

\end{abstract}
\begin{IEEEkeywords}
6G, Immersive Communication, Spiking Neural Network, brain-computer Interface, Deep Learning.
\end{IEEEkeywords}

\vspace{-5pt}
\section{Introduction}
\vspace{-5pt}
The paradigm shift toward 6G is accelerating the realization of immersive communication as a core pillar of next-generation wireless networks~\cite{itu_m2160}. By supporting ultra-high data rates, extremely low latency, and highly reliable connectivity, immersive communication system is expected to unlock a new class of applications such as extended/virtual reality (XR/VR), holographic telepresence, and real-time multi-user collaboration, where users can seamlessly interact with digital content and remote environments with near-natural perception and responsiveness. To fully deliver such immersive experiences, the system typically relies on collaborative sensing across heterogeneous devices, such as head-mounted display (HMD), wearables, and surrounding IoT nodes. By combining these complementary modalities, the network can construct a richer and more reliable view of the session context (e.g., human-centric metaverse~\cite{10496440} and human digital twins~\cite{shang2024biologically}), which is essential for responsive interaction, adaptive rendering, and intelligent service orchestration.

However, realizing such multi-sensor-driven immersive intelligence is challenging. 
First, the collected signals are often heterogeneous in formats and statistics due to diverse hardware platforms and vendor-specific sensing pipelines, resulting in inconsistent sampling rates, imperfect synchronization, varying noise levels, and even missing measurements. This heterogeneity significantly increases the service-side burden, which must perform real-time alignment, fusion, and learning over high-dimensional and time-varying streams under strict latency and resource constraints. 
Meanwhile, immersive terminals (e.g., AR/VR headsets) are inherently resource-constrained. The practical form-factor requirements (e.g., small size and low weight) translate into limited battery capacity and thermal headroom, which constrain sustained on-device computation and wireless transmission, and also limit the available footprint for sensors~\cite{10706833}.
Therefore, it is highly desirable to reduce the reliance on numerous heterogeneous sensing modalities and sensor types while preserving immersive service quality, thereby alleviating service-side fusion complexity and terminal-side burdens, and improving scalability and robustness in practical deployments

More importantly, ``immersive experiences’’ are not solely determined by external physical-context sensing. They also depend critically on users' internal perceptual and physiological states in the real world.
For example, cybersickness can be triggered by perception-related factors (e.g., sensory conflicts) and user-specific susceptibility, which are difficult to infer or mitigate from external sensing alone~\cite{10706833}. Therefore, immersive communication systems should incorporate user-state estimation from physiological and neural signals, which provide observable correlates of user-centric responses (e.g., discomfort, workload, attention, and engagement). This enables real-time, closed-loop adaptation of rendering and interaction to maintain stable quality of experience across individuals and applications~\cite{shang2024biologically,10496440}.

To tackle the above challenges and fulfill the above requirements, this work proposes a novel brain-computer interface (BCI)-driven immersive communication framework. As an information-rich modality that directly reflects users' perception, intention, and neuro-physiological responses, brain activity provides a complementary sensing channel beyond conventional environment and motion-centric sensors (e.g., inertial measurement unit (IMU) and camera)~\cite{10496440}. By non-invasively acquiring brain signals such as electroencephalography (EEG) via BCI devices, the immersive system can infer user-centric states (e.g., intention and perception-related discomfort) in real time, enabling more personalized adaptation and improved immersion robustness under strong individual variability~\cite{10496440}. To fully exploit this potential under practical on-device constraints, we develop an energy-efficient personalized federated learning (PFL) algorithm for brain-signal processing, where spiking neural networks (SNNs) are integrated to enable sparse, event-driven computation and reduce the energy cost of training and inference while maintaining competitive personalization performance under heterogeneous user data (i.e., neurodiversity).
We further theoretically prove that the sparse spiking activity in SNNs can mitigate data drift in PFL by reducing gradient dissimilarity across users, thereby improving training stability and personalization performance under heterogenous brain-signal distributions. The experimental results conducted on real brain dataset (i.e., EEG signals~\cite{goldberger2000physiobank}) show that, under the same network architecture, the proposed SNN-enabled PFL achieves the highest identification accuracy while reducing energy consumption by 6.46$\times$ compared with the traditional artificial neural network (ANN)-based personalized baseline.

\vspace{-5pt}
\section{Preliminaries and System Overview}
\subsection{Brain-Computer Interface and Brain Signal Process}
BCI enables direct brain signal acquisition and interpretation, which provides a unique sensing modality for human-centric 6G immersive communication systems~\cite{10496440}.
By continuously decoding cognitive states and user intents (e.g., attention level, mental workload, or imagined actions), brain signal processing can support proactive and context-aware adaptation, such as attention-aware rendering, interaction assistance, and personalized digital-twin updating~\cite{10496440,shang2024biologically}.
In this sense, BCI-driven perception complements conventional wearable or IoT sensing by injecting neural-level information into the immersive loop, thereby improving the realism and responsiveness of immersive services~\cite{10496440,10706833}.

In this work, we focus on EEG as a representative brain signal for BCI-enabled immersive applications, since it can be acquired non-invasively~\cite{10496440,10706833} and has been actively explored in emerging BCI-HMD prototypes~\cite{cognixion}.
Traditional EEG-based BCI systems typically rely on a hand-crafted processing pipeline, including signal preprocessing (e.g., filtering and artifact suppression), feature extraction in the time-frequency domains (e.g., spatial filtering), and shallow classifiers (e.g., SVM)~\cite{10496440}.
Such pipelines heavily rely on hand-crafted priors and carefully tuned feature extractors, making their performance sensitive to preprocessing choices and often requiring non-trivial subject and session-specific calibration.
In contrast, deep learning provides a promising alternative by learning task-relevant representations end-to-end from raw EEG and reducing reliance on manual feature design and heuristic preprocessing, which can improve robustness to complex spatio-temporal patterns and facilitate more adaptive, personalized inference for immersive interactions\cite{10496440,10706833}.

\subsection{System Overview}
\begin{figure}[t]
    \centering
    \includegraphics[width=0.37\textwidth]{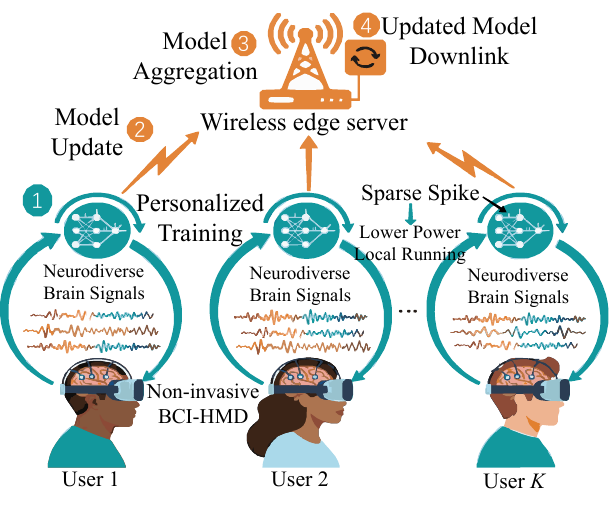}
    \caption{Illustration of the proposed BCI-enabled immersive communication framework supported by SNN-driven PFL. A wireless edge server (WES) hosts immersive applications and delivers VR/AR experiences to $K$ users. Each user is equipped with an integrated BCI-HMD that enables simultaneous immersive interaction and brain-signal acquisition. The acquired brain signals are processed locally by an SNN with sparse, event-driven spiking activity, enabling energy-efficient inference and continual personalization.}
    \label{fig:SNNforHDT}
\end{figure}


Benefiting from the powerful representation-learning ability of deep learning, we consider to use deep learning models for EEG signal processing and analysis.
Nevertheless, directly deploying deep learning model for BCI-enabled immersive systems remains non-trivial.
First, brain signals are highly privacy-sensitive, making continuous uploading for centralized training or inference undesirable. Although federated learning (FL) can keep raw signals on-device and only exchange model updates~\cite{fl}, the repeated local training, frequent uplink communication, and always-on inference with dense ANN computation can still impose heavy energy overhead on resource-constrained immersive terminals~\cite{9583900}.

On the other hand, brain signals exhibit strong \emph{neurodiversity} (as illustrated in Fig.~\ref{fig:SNNforHDT}), making a traditional \emph{one-size-fits-all} solution difficult to generalize and often leading to unstable training under heterogeneous  data.
For example, EEG is typically recorded from multiple scalp channels distributed over different brain regions, and the resulting multi-channel waveforms are highly sensitive to subject-specific neuroanatomy, electrode placement and contact impedance, and session-dependent factors such as fatigue, attention, and emotion~\cite{10496440}. Consequently, neurodiversity arises not only between heterogeneous users, but also within the same user across channels and time, yielding non-stationary and uneven signal statistics that challenge robust personalization and continual learning.
These challenges collectively motivate a \emph{privacy-preserving}, \emph{personalization-aware}, and \emph{energy-efficient} edge-learning framework.

As shown in Fig.~\ref{fig:SNNforHDT}, the proposed BCI-enabled immersive communication framework consists of
a set of $K$ user devices $\mathcal{K}=\{1,2,\ldots,K\}$ and a wireless edge server (WES).
Each user device (i.e., a BCI-enabled HMD) collects local brain-signal data and performs SNN-driven on-device training and inference,
while the WES coordinates the PFL framework over wireless links. By integrating BCI with PFL, the system can continuously update personalized brain-signal analysis models across users without collecting raw EEG signals, thereby enabling real-time estimation of user-centric states (e.g., intention and perception-related discomfort) for adaptive rendering and interaction.
Meanwhile, the on-device SNN-enabled learning framework further reduces the compute and energy cost of continual training and inference, making the proposed framework suitable for long-term, always-on immersive applications.
Overall, the proposed design (i) alleviates the service-side burden of multimodal data processing and enriches immersive-system sensing by introducing a BCI that directly captures user-centric neuro-physiological states that are difficult to infer from conventional environment and motion centric sensors, and (ii) enables privacy-preserving and energy-efficient personalization via SNN-driven PFL, supporting continual adaptation under neurodiverse user data.

\vspace{-5pt}
\section{Proposed Learning Algorithm}\label{sec3}
\vspace{-5pt}
In this section, we first present the energy-efficient SNN enabled personalized FL. We then analyze the convergence and advantages of the proposed algorithm.

\vspace{-5pt}
\subsection{Personalized Federated Learning}\label{Section:PFL}

In immersive communication systems, brain signals (e.g., EEG) exhibit inherent neurodiversity across users, which often results in non-IID local datasets $\{\mathcal{D}_k\}_{k\in\mathcal{K}}$.
In such a case, learning a \emph{single} global model via standard FL (e.g., FedAvg~\cite{fl}) may yield suboptimal and uneven inference quality across users.
To better accommodate user heterogeneity, we introduce PFL~\cite{9743558}, where each user-$k$ maintains a personalized model $\boldsymbol{w}_k$ while collaboratively learning a shared global reference model $\boldsymbol{w}$. Accordingly, the global optimization objective can be expressed as:
\begin{equation}
\min_{\boldsymbol{w},\{\boldsymbol{w}_k\}} 
\sum_{k=1}^{K}\frac{|\mathcal{D}_k|}{\mathcal{D}}
\left(
F_k(\boldsymbol{w}_k) + \frac{\mu}{2}\left\|\boldsymbol{w}_k-\boldsymbol{w}\right\|_2^2
\right),
\label{eq:pfl_obj}
\end{equation}
where $\mathcal{D}=\sum_{k=1}^{K}{\mathcal{D}_k}$ denote the entire dataset, $F_k(\boldsymbol{w}_k)=\mathcal{L}(\boldsymbol{w}_k,\mathcal{D}_k)$ is the predetermined loss function, and $\mu>0$ controls the degree of personalization.
Moreover, at each global round $r$, each user performs local optimization to update its personalized model by approximately minimizing the following regularized objective~\cite{9743558}:
\begin{equation}
\boldsymbol{w}_k^{(r+1)}
\approx
\arg\min_{\boldsymbol{w}_k}
\left(
F_k(\boldsymbol{w}_k) + \frac{\mu}{2}\left\|\boldsymbol{w}_k-\boldsymbol{w}^{(r)}\right\|_2^2
\right),
\label{eq:pfl_local}
\end{equation}
and uploads $\boldsymbol{w}_k^{(r+1)}$ to the WES.
The WES then updates the global reference model via weighted aggregation:  
\begin{equation}
    \boldsymbol{w}^{(r+1)}=\sum_{k=1}^{K}\frac{|\mathcal{D}_k|}{\mathcal{D}}\,\boldsymbol{w}^{(r)}_k.
    \label{eq:fedavg}
    \end{equation}
This PFL mechanism enables each user to retain a personalized model for reliable brain-state decoding while still leveraging cross-user collaboration through the shared global reference model.

\vspace{-5pt}
\subsection{Energy-efficient Personalized FL-enabled by SNN}\label{algorithm}

To support continuous brain signal processing on energy-constrained user devices, we adopt SNNs as the model backbone in PFL.
Different from conventional ANN relying on continuous activations and multiply-and-accumulate (MAC) operations~\cite{9583900}, SNNs propagate discrete binary spikes in an event-driven manner~\cite{9583900,panda2020toward}.
This characteristic enables training and inference dominated by accumulate (AC) operations, thereby reducing computational complexity and energy consumption while naturally capturing temporal dependencies in EEG signals. The details of the SNN are elaborated as follows.

\subsubsection{SNN Framework}
We adopt the leaky integrate-and-fire (LIF) neuron model~\cite{10636728,shang2025energy} to construct SNN.
Let $\mathcal{U}_j(t)$ denote the membrane potential of neuron-$j$ at the SNN time step $t$. The dynamic of LIF neuron model is given by:
\begin{equation}
\mathcal{U}_j(t+1) = (1-\lambda)\mathcal{U}_j(t) + \lambda \sum_{i=1}^M W_{i,j} \mathcal{I}_i(t),
\label{LIF}
\end{equation}
where $\lambda$ is the leakage coefficient, $W_{i,j}$ is the synaptic weight, and $\mathcal{I}_i(t)$ is the input from presynaptic neuron $i$.
When $\mathcal{U}_j(t)$ exceeds a firing threshold $\mathcal{U}_{\text{th}}$, neuron-$j$ emits a spike $\mathcal{S}_j(t)=1$ and resets its potential, otherwise $\mathcal{S}_j(t)=0$. Based on the LIF neuron dynamics, we construct the SNN and describe its training procedure as follows.

\subsubsection{Training of SNN-based Personalized FL}
Let $\tilde{F}_k(\boldsymbol{w}_k)=\tilde{\mathcal{L}}\big(\boldsymbol{w},\mathcal{D}_k\big)$ denote the loss of SNN-enabled PFL. Training the SNN-based neural network requires back-propagation through time (BPTT)~\cite{9583900} due to the temporal recursion in \eqref{LIF}.
Therefore, the gradient of the loss $\tilde{\mathcal{L}}$ with respect to the synaptic weight $W_{i,j}$ can be written as:
\begin{equation}
\begin{aligned}
	&\frac{\partial \tilde{\mathcal{L}}}{\partial W_{i,j}}=\sum_{t=1}^T{\frac{\partial \tilde{\mathcal{L}}}{\partial \mathcal{U} _j\left( t \right)}\frac{\partial \mathcal{U} _j\left( t \right)}{\partial W_{i,j}}}\\
	&=\sum_{t=1}^{T-1}{\frac{\partial \tilde{\mathcal{L}}}{\partial \mathcal{U} _j\left( t \right)}\frac{\partial \mathcal{U} _j\left( t \right)}{\partial W_{i,j}}} +\frac{\partial \tilde{\mathcal{L}}}{\partial \mathcal{U} _j\left( T \right)}\frac{\partial \mathcal{U} _j\left( T \right)}{\partial W_{i,j}}\\
	&=\sum_{t=1}^{T-1}{\left( \frac{\partial \tilde{\mathcal{L}}}{\partial \mathcal{S} _j\left( t \right)}\frac{\partial \mathcal{S} _j\left( t \right)}{\partial \mathcal{U} _j(t)}+\frac{\partial \tilde{\mathcal{L}}}{\partial \mathcal{U} _j(t+1)}\frac{\partial \mathcal{U} _j(t+1)}{\partial \mathcal{U} _j(t)} \right)} \\
	&~\quad \times \frac{\partial \mathcal{U} _j(t)}{\partial W_{i,j}}+\frac{\partial \tilde{\mathcal{L}}}{\partial \mathcal{S} _j\left( T \right)}\frac{\partial \mathcal{S} _j\left( T \right)}{\partial \mathcal{U} _j(T)}\frac{\partial \mathcal{U} _j(T)}{\partial W_{i,j}}\\
\end{aligned},
\label{eq.bp}
\end{equation}
where $T$ is the total number of SNN time steps.
However, the spiking function is non-differentiable due to its binary output, i.e., $\mathcal{S}_j(t)=0$ or $\mathcal{S}_j(t)=1$.
To enable effective gradient-based optimization, we adopt a surrogate gradient method~\cite{10636728,shang2025energy,fang2021incorporating}.
Specifically, during back-propagation, we replace the non-differentiable spike function with a smooth surrogate $\varphi(\cdot)$, which is given by:
\begin{equation}
\varphi(\mathcal{S}) = \frac{1}{\pi} \arctan\left(\frac{\pi \eta}{2}\mathcal{S}\right) + \frac{1}{2}, ~~\varphi ^{\prime}\left( \mathcal{S} \right) =
\frac{\eta}{2}\frac{1}{1+\left( \frac{\pi \eta}{2}\mathcal{S} \right) ^2},
\label{ada}
\end{equation}
where $\eta$ is a tunable smoothing parameter and $\varphi'(\mathcal{S})$ denotes the derivative of $\varphi(\mathcal{S})$.
As a result, surrogate-based training enables efficient learning of SNN-based personalized EEG signal processing within the PFL procedure. It is worth noting that the surrogate function in~\eqref{ada} is only used to approximate gradients in back-propagation, while the forward dynamics still follow the original LIF model in \eqref{LIF}.

\subsection{Convergence Analysis and Benefits of SNN-based PFL}\label{sec:theory}
We present the algorithm's convergence analysis in this subsection. Specifically, although the spike generation in the LIF dynamics~\eqref{LIF} is non-differentiable, surrogate-gradient learning enables gradient-based training by replacing the spike derivative with a smooth surrogate in back-propagation~\cite{8891809}, i.e., \eqref{ada}. 
Consequently, training SNNs with surrogate gradients is equivalent to performing stochastic gradient-based optimization on an implicitly defined differentiable objective, i.e., $\tilde{F}_k$.
This observation allows us to analyze the convergence of SNN-based PFL by applying standard PFL theory~\cite{10.5555/3495724.3497520} to the surrogate objective, without requiring differentiability of the original spike model. 
In the following, we first present a convergence guarantee (with respect to the surrogate objective) and then discuss why SNN can improve robustness under neurodiverse brain signals while reducing energy consumption. To this end, we first introduce the following assumption has been widely utilized in FL works~\cite{10.5555/3495724.3497520,10229017}.

\vspace{-5pt}
\begin{assumption}[Smooth surrogate objectives]\label{assump:smooth}
For each user-$k$, $\tilde{F}_k(\cdot)$ is lower bounded and has $L$-Lipschitz continuous gradients, i.e., 
$\big\|\nabla \tilde{F}_k(\boldsymbol{w})-\nabla \tilde{F}_k(\boldsymbol{w}')\big\|\le L\big\|\boldsymbol{w}-\boldsymbol{w}'\big\|$ for all $\boldsymbol{w},\boldsymbol{w}'\in\mathbb{R}^{d}$.
Here, $d$ is the number of trainable parameters (i.e., the dimension of the model parameter vector $\boldsymbol{w}$), and $L>0$ is the smoothness constant that upper bounds how fast the gradient can change with respect to $\boldsymbol{w}$.
\end{assumption}
\vspace{-5pt}
\begin{assumption}[Bounded-variance stochastic gradients]\label{assump:variance}
For each user-$k$, let $\boldsymbol{g}_k(\boldsymbol{w}_k;\xi)$ be the stochastic gradient computed at $\boldsymbol{w}_k$ using a random mini-batch $\xi$ sampled from the local dataset $\mathcal{D}_k$.
Then, for any $\boldsymbol{w}_k\in\mathbb{R}^{d}$, it holds that $\mathbb{E}_{\xi}\!\left[\boldsymbol{g}_k(\boldsymbol{w}_k;\xi)\right]=\nabla \tilde{F}_k(\boldsymbol{w}_k)$ and $\mathbb{E}_{\xi}\!\left\|\boldsymbol{g}_k(\boldsymbol{w}_k;\xi)-\nabla \tilde{F}_k(\boldsymbol{w}_k)\right\|_2^2\le \sigma^2$.
Here, $\mathbb{E}_{\xi}[\cdot]$ denotes expectation over mini-batch sampling, and $\sigma^2$ is an upper bound on the gradient estimation variance induced by stochastic mini-batches.
\end{assumption}
\vspace{-5pt}

\subsubsection{Convergence analysis}\label{subsec:surrogate_obj}
Recall that the SNN is trained via surrogate gradients, where the spike derivative is replaced by the smooth surrogate in \eqref{ada}.
Accordingly, the actual optimization performed by each user-$k$ corresponds to minimizing a differentiable \emph{surrogate loss} $\tilde{F}_k(\boldsymbol{w}_k)$.
Therefore, the regularized PFL objective in \eqref{eq:pfl_obj} is written in the surrogate form as:
\begin{equation}
\min_{\boldsymbol{w},\{\boldsymbol{w}_k\}}
\sum_{k=1}^{K} p_k
\left(
\tilde{F}_k(\boldsymbol{w}_k) + \frac{\mu}{2}\|\boldsymbol{w}_k-\boldsymbol{w}\|_2^2
\right),
\label{eq:pfl_surrogate_obj}
\end{equation}
where $p_k \triangleq |\mathcal{D}_k|/|\mathcal{D}|$. At global round $r$, user-$k$ approximately solves the proximal subproblem:
\begin{equation}
\boldsymbol{w}_k^{(r+1)} \approx 
\arg\min_{\boldsymbol{w}_k}\;
\tilde{F}_k(\boldsymbol{w}_k) + \frac{\mu}{2}\big\|\boldsymbol{w}_k-\boldsymbol{w}^{(r)}\big\|_2^2
\label{eq:local_prox_sur}
\end{equation}
by performing $E$ local mini-batch updates via surrogate BPTT, where $E$ denotes the number of local update steps executed by each user per global round.
Specifically, for a mini-batch $\xi$ sampled from $\mathcal{D}_k$, a single local update takes the stochastic gradients (SGD), which is given by:
\begin{equation}
\boldsymbol{w}_k \leftarrow 
\boldsymbol{w}_k - \alpha
\Big(
\boldsymbol{g}_k(\boldsymbol{w}_k;\xi) + \mu(\boldsymbol{w}_k-\boldsymbol{w}^{(r)})
\Big),
\label{eq:prox_sgd_step}
\end{equation}
where $\alpha>0$ is the step size.
After collecting $\{\boldsymbol{w}_k^{(r+1)}\}_{k=1}^K$, the WES updates the global model by weighted averaging,
$\boldsymbol{w}^{(r+1)}=\sum_{k=1}^{K}p_k\,\boldsymbol{w}_k^{(r+1)}$.

We now establish a convergence guarantee with respect to the surrogate objective. Specifically, we introduce the following theorem.
\vspace{-5pt}
\begin{theorem}\label{thm:convergence_sur}
Consider the above PFL procedure that, at each global round $r$, performs $E$ local updates of the form~\eqref{eq:prox_sgd_step} to approximately solve~\eqref{eq:local_prox_sur},
followed by weighted aggregation to update $\boldsymbol{w}^{(r)}$.
Under standard step-size conditions required by proximal-PFL analyses~\cite{10.5555/3495724.3497520},
the sequence $\{\boldsymbol{w}^{(r)}\}$ satisfies:
\begin{equation}
\min_{0\le r \le R-1}\;
\mathbb{E}\big[\|\nabla \tilde{F}_{\mu}(\boldsymbol{w}^{(r)})\|_2^2\big]
\;\longrightarrow\; 0
\quad \text{as } R\to\infty,
\label{eq:stationary_sur}
\end{equation}
where $\tilde{F}_{\mu}(\boldsymbol{w})$ is the surrogate envelope (i.e., the surrogate Moreau envelope~\cite{10.5555/3495724.3497520}) associated with \eqref{eq:pfl_surrogate_obj}, defined as
\begin{equation}
\tilde{F}_{\mu}(\boldsymbol{w})
\triangleq
\sum_{k=1}^{K} p_k
\min_{\boldsymbol{w}_k}\;
\tilde{F}_k(\boldsymbol{w}_k) + \frac{\mu}{2}\|\boldsymbol{w}_k-\boldsymbol{w}\|_2^2.
\label{eq:surrogate_envelope_def}
\end{equation}
Hence, the proposed SNN-based PFL converges to a stationary point of the surrogate envelope.
\end{theorem}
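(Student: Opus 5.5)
The plan is to reduce the statement to the pFedMe-type analysis of~\cite{10.5555/3495724.3497520}, applied to the surrogate losses $\tilde{F}_k$. The first step is to establish the properties of the Moreau envelope. For each $k$, define
\[
\tilde{f}_k(\boldsymbol{w})\triangleq\min_{\boldsymbol{w}_k}\tilde{F}_k(\boldsymbol{w}_k)+\frac{\mu}{2}\|\boldsymbol{w}_k-\boldsymbol{w}\|_2^2 .
\]
By Assumption~\ref{assump:smooth}, $\tilde{F}_k$ is $L$-smooth, so the inner objective is $(\mu-L)$-strongly convex whenever $\mu>L$. Under this condition, which I take to be part of the ``standard conditions'', the proximal point $\hat{\boldsymbol{w}}_k(\boldsymbol{w})$ is unique. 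The envelope is then differentiable with
\[
\nabla\tilde{f}_k(\boldsymbol{w})=\mu\big(\boldsymbol{w}-\hat{\boldsymbol{w}}_k(\boldsymbol{w})\big),
\]
and this gradient is $L_F$-Lipschitz with $L_F=\mu L/(\mu-L)$. Hence $\tilde{F}_\mu=\sum_k p_k\tilde{f}_k$ is $L_F$-smooth and lower bounded, since each $\tilde{F}_k$ is lower bounded.

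The second step is to control the inexact local solve. With step size $\alpha\le 1/(L+\mu)$, the $E$ steps of~\eqref{eq:prox_sgd_step} form SGD on a strongly convex, $(L+\mu)$-smooth function. Combined with Assumption~\ref{assump:variance}, this gives
\[
\mathbb{E}\|\boldsymbol{w}_k^{(r+1)}-\hat{\boldsymbol{w}}_k(\boldsymbol{w}^{(r)})\|^2\le\nu^2 ,
\]
where $\nu^2$ is of the form $(1-\alpha(\mu-L))^{E}\,\|\boldsymbol{w}^{(r)}-\hat{\boldsymbol{w}}_k(\boldsymbol{w}^{(r)})\|^2$ plus a term $\mathcal{O}(\alpha\sigma^2/(\mu-L))$.

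The third step is to rewrite the aggregation as an inexact gradient step on $\tilde{F}_\mu$. I read the aggregation as using the freshly updated local models $\boldsymbol{w}_k^{(r+1)}$; the indexing in~\eqref{eq:fedavg} should be interpreted this way. Then
\[
\boldsymbol{w}^{(r+1)}=\sum_k p_k\boldsymbol{w}_k^{(r+1)}=\boldsymbol{w}^{(r)}-\frac{1}{\mu}\nabla\tilde{F}_\mu(\boldsymbol{w}^{(r)})+\boldsymbol{e}^{(r)},
\qquad
\boldsymbol{e}^{(r)}=\sum_k p_k\big(\boldsymbol{w}_k^{(r+1)}-\hat{\boldsymbol{w}}_k(\boldsymbol{w}^{(r)})\big).
\]
If a global step size $\beta$ is used as in pFedMe, the effective step becomes $\beta/\mu$. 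Applying the $L_F$-smoothness descent lemma gives
\[
\mathbb{E}\tilde{F}_\mu(\boldsymbol{w}^{(r+1)})\le\mathbb{E}\tilde{F}_\mu(\boldsymbol{w}^{(r)})-c\,\mathbb{E}\|\nabla\tilde{F}_\mu(\boldsymbol{w}^{(r)})\|^2+C\,\mathbb{E}\|\boldsymbol{e}^{(r)}\|^2 ,
\]
where Young's inequality absorbs the cross term. This requires $\beta/\mu\le 1/(2L_F)$.

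Finally, I telescope over $r=0,\dots,R-1$ and divide by $R$. This gives
\[
\min_r\mathbb{E}\|\nabla\tilde{F}_\mu(\boldsymbol{w}^{(r)})\|^2\le\frac{\tilde{F}_\mu(\boldsymbol{w}^{(0)})-\inf\tilde{F}_\mu}{cR}+\frac{C}{c}\cdot\max_r\mathbb{E}\|\boldsymbol{e}^{(r)}\|^2 .
\]
The first term vanishes as $R\to\infty$.

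The main obstacle is showing that the residual error term also vanishes, since a constant-step local SGD leaves an $\mathcal{O}(\alpha\sigma^2)$ floor. There are two ways to handle this. The first is to let the step sizes satisfy the standard conditions: decrease $\alpha$ (and the local accuracy $\nu$) with $R$, for example $\alpha_r\to0$ with $E_r$ growing, so that $\nu_r^2\to0$. The second is to note that the contraction term $\|\boldsymbol{w}^{(r)}-\hat{\boldsymbol{w}}_k\|^2=\|\nabla\tilde{f}_k\|^2/\mu^2$ is proportional to the gradient itself. For large enough $E$, it can then be absorbed into the $-c\|\nabla\tilde{F}_\mu\|^2$ term. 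The leftover cross-client dissimilarity $\sum_k p_k\|\nabla\tilde{f}_k-\nabla\tilde{F}_\mu\|^2$ would be handled with a bounded-diversity bound, as in pFedMe. With either choice the right-hand side tends to zero, giving~\eqref{eq:stationary_sur}.
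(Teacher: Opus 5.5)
Your route differs from the paper's. The paper's proof is a pure citation: it notes that \eqref{eq:prox_sgd_step} is a stochastic proximal-gradient step for the smooth subproblem \eqref{eq:local_prox_sur}, then declares that the pFedMe theory ``applies directly.'' You carry out the argument yourself.

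Your key observation is that plain averaging of the local prox outputs is exactly an inexact gradient step $\boldsymbol{w}^{(r+1)}=\boldsymbol{w}^{(r)}-\frac{1}{\mu}\nabla\tilde F_\mu(\boldsymbol{w}^{(r)})+\boldsymbol{e}^{(r)}$ on the envelope. This buys you something the citation does not. The paper's procedure is pFedMe with one outer local step of size $1/\mu$ and server mixing $\beta=1$, which is not obviously inside pFedMe's step-size regime. Your sharper constant $L_F=\mu L/(\mu-L)$ shows that the unit aggregation step is a descent step once $\mu\ge 3L$, so no extra $\beta$ is needed.

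Your version also surfaces hypotheses the paper leaves implicit:
$\mu>L$, so the inner problem is strongly convex and the envelope is smooth;
local iterates initialized at $\boldsymbol{w}^{(r)}$, which your $\nu^2$ bound uses;
a bounded-diversity condition that is not among Assumptions~\ref{assump:smooth}--\ref{assump:variance}.
The citation is shorter but hides all of this. It also hides the fact that pFedMe's nonconvex bound keeps a non-vanishing term in the inner-solve accuracy.

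Your closing step needs repair, because your two options are not alternatives.

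Option two alone leaves an $R$-independent floor. With $\alpha$ and $E$ fixed, after absorbing the $\|\nabla\tilde F_\mu\|^2$ part you still have a term of order $(1-\alpha(\mu-L))^E\sigma_F^2/\mu^2+\alpha\sigma^2/(\mu-L)$. Here $\sigma_F^2$ bounds $\sum_k p_k\|\nabla\tilde f_k-\nabla\tilde F_\mu\|^2$. Bounded diversity makes this term finite, not zero.

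Option one alone needs a uniform bound on $\mathbb{E}\|\boldsymbol{w}^{(r)}-\hat{\boldsymbol{w}}_k(\boldsymbol{w}^{(r)})\|^2$. Assumptions~\ref{assump:smooth}--\ref{assump:variance} do not supply one.

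The fix is to combine them: absorb the gradient part, bound the dissimilarity, and take horizon-dependent $\alpha\to 0$ with $\alpha E\to\infty$. Then your $\max_r$ is harmless and the right-hand side vanishes.

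Growing $E$ is genuinely necessary, not an artifact of your method. Take quadratics $\tilde F_k(\boldsymbol{w})=\frac{a_k}{2}\|\boldsymbol{w}-\boldsymbol{b}_k\|^2$ with exact gradients. Then $E$ local steps from $\boldsymbol{w}^{(r)}$ give $\boldsymbol{w}^{(r+1)}=\boldsymbol{w}^{(r)}-\frac{1}{\mu}\sum_k p_k(1-q_k^E)\nabla\tilde f_k(\boldsymbol{w}^{(r)})$, where $q_k=1-\alpha(a_k+\mu)$. The fixed point of this map is not stationary for $\tilde F_\mu$ when the $a_k$ and the $\boldsymbol{b}_k$ differ. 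Started at that fixed point, the iterates stay there with a constant nonzero envelope gradient.

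So the statement as written, with $E$ fixed and only step sizes tuned, overclaims. Your proposal correctly identifies the qualification it needs: the local solves must become exact.
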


\vspace{-7pt}
\begin{proof}
By surrogate-gradient training, each user optimizes the differentiable surrogate loss $\tilde{F}_k$.
Assumption~\ref{assump:smooth} ensures that $\tilde{F}_k$ has $L$-Lipschitz continuous gradients (possibly nonconvex),
and Assumption~\ref{assump:variance} ensures that the mini-batch gradient $\boldsymbol{g}_k(\boldsymbol{w}_k;\xi)$ is an unbiased estimator of $\nabla \tilde{F}_k(\boldsymbol{w}_k)$ with bounded variance.
Therefore, the local update \eqref{eq:prox_sgd_step} is exactly a stochastic proximal-gradient step for the smooth proximal subproblem \eqref{eq:local_prox_sur}.
Under these regularity conditions and standard step-size conditions, the convergence theory of pFedMe applies directly~\cite{10.5555/3495724.3497520}.
Consequently, the proposed SNN-driven PFL achieves stationarity on the surrogate envelope $\tilde{F}_{\mu}(\boldsymbol{w})$.
\end{proof}
\vspace{-5pt}

\subsubsection{SNN sparsity for drift mitigation under neurodiversity}\label{subsec:benefits_drift}
As aforementioned, a key challenge in BCI-enabled immersive communication systems is the intrinsic neurodiversity of EEG data, which arises not only \emph{across users} but also \emph{within the same user} across heterogeneous scalp channels and over time. Such heterogeneous and non-stationary data distributions can lead to user drift in federated optimization.
Specifically, the gradient dissimilarity can be used to quantify such drift, which is defined as:
\begin{equation}
\Delta(\boldsymbol{w})\triangleq \sum_{k=1}^{K} p_k
\big\|\nabla \tilde{F}_k(\boldsymbol{w}) - \nabla \tilde{F}(\boldsymbol{w})\big\|_2^2,
\label{eq:drift_def}
\end{equation}
where $\tilde{F}(\boldsymbol{w})\triangleq \sum_{k=1}^{K} p_k\,\tilde{F}_k(\boldsymbol{w})$.
Smaller $\Delta(\boldsymbol{w})$ generally implies more stable aggregation and improved training performance under heterogeneity,
and motivates drift-correction methods in FL~\cite{karimireddy2020scaffold}. Note that $\Delta(\boldsymbol{w})$ captures only data-driven gradient heterogeneity at a common $\boldsymbol{w}$, $\mu$ merely provides proximal anchoring and is thus omitted.

By introducing SNNs, the local training dynamics become event-driven and typically exhibit sparse spiking activity.
This sparsity, together with the bounded surrogate derivative in \eqref{ada}, further constrains the effective back-propagation signals and thus can yield a smaller upper bound on $\|\nabla \tilde{F}_k(\boldsymbol{w})\|_2$.
Such a bound can be further translated into a tighter upper bound on the drift metric $\Delta(\boldsymbol{w})$ in \eqref{eq:drift_def}.
Following the standard surrogate-gradient SNN training literature~\cite{8891809}, we assume a bounded surrogate derivative and sparse spiking activity to control gradient magnitudes and capture event-driven computation.
\vspace{-5pt}
\begin{assumption}[Bounded surrogate derivative and sparse activity]\label{assump:snn_bounds}
The surrogate derivative is bounded as $|\varphi'(\cdot)|\le c_\varphi$, where $c_\varphi>0$ controls the maximum magnitude of the surrogate gradient and prevents exploding gradients during BPTT.
Moreover, the spiking activity is sparse: letting $\boldsymbol{s}^{(\ell)}(t)\in\{0,1\}^{d_\ell}$ denote the spike vector at layer $\ell$ and SNN time step $t$, there exists $\rho\in(0,1]$ such that $\mathbb{E}\|\boldsymbol{s}^{(\ell)}(t)\|_0 \le \rho d_\ell$ for all $\ell,t$.
Here, $\|\cdot\|_0$ counts the number of nonzero entries (i.e., the number of spikes), $d_\ell$ is the number of neurons in layer $\ell$, and $\rho$ is the firing sparsity level: smaller $\rho$ indicates sparser spiking activity.
\end{assumption}
\vspace{-10pt}
\begin{proposition}[Sparsity-induced drift upper bound]\label{prop:drift}
Suppose that for all users $k$ and all iterates, the gradient norms satisfy
$\|\nabla \tilde{F}_k(\boldsymbol{w})\|_2 \le G(\rho)$ for some non-decreasing function $G(\rho)$.
Then, for any $\boldsymbol{w}$:
\begin{equation}
\Delta(\boldsymbol{w}) \le 4\,G(\rho)^2.
\label{eq:drift_bound_general}
\end{equation}
Moreover, under Assumption~\ref{assump:snn_bounds} and standard boundedness conditions on the network parameters and signals, the gradient magnitude scales with the spiking activity. Therefore, one can upper bound $G(\rho)$ as $G(\rho)\le C\sqrt{\rho}$ for constant $C>0$, which yields:
\begin{equation}
\Delta(\boldsymbol{w}) \le 4 C^2 \rho.
\label{eq:drift_bound_rho}
\vspace{-5pt}
\end{equation}
\end{proposition}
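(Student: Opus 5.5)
The bound \eqref{eq:drift_bound_general} follows from an elementary norm inequality; the $\sqrt{\rho}$ scaling of $G(\rho)$ then comes from a layer-wise analysis of the BPTT gradient in \eqref{eq.bp}. I will do the general bound first and then specialize it.

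\textbf{Step 1 (general bound).} Fix $\boldsymbol{w}$. By convexity of $\|\cdot\|_2$ and $\sum_k p_k=1$, the aggregate gradient satisfies
\[
\|\nabla\tilde{F}(\boldsymbol{w})\|_2=\Big\|\sum_{k}p_k\nabla\tilde{F}_k(\boldsymbol{w})\Big\|_2\le \sum_k p_k G(\rho)=G(\rho).
\]
Then for each $k$, the inequality $\|a-b\|_2^2\le(\|a\|_2+\|b\|_2)^2$ gives
\[
\|\nabla\tilde{F}_k-\nabla\tilde{F}\|_2^2\le (2G(\rho))^2=4G(\rho)^2.
\]
Averaging with the weights $p_k$ yields \eqref{eq:drift_bound_general}.

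A sharper route is also available. The bias--variance identity gives
\[
\Delta=\sum_k p_k\|\nabla\tilde{F}_k\|_2^2-\|\nabla\tilde{F}\|_2^2\le G(\rho)^2,
\]
so the constant $4$ is loose, but it suffices as stated.

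\textbf{Step 2 (per-weight gradient structure).} To obtain $G(\rho)\le C\sqrt{\rho}$, I would unroll \eqref{eq.bp} layer by layer. In the LIF dynamics \eqref{LIF}, $\partial\mathcal{U}_j(t)/\partial W_{i,j}$ is a leaky sum of the presynaptic inputs $\lambda\,\mathcal{I}_i(t')$ with $t'<t$. For hidden layers these inputs are spikes $\boldsymbol{s}^{(\ell-1)}_i(t')\in\{0,1\}$. Hence the gradient for $W^{(\ell)}$ factors as
\[
\frac{\partial\tilde{\mathcal{L}}}{\partial W^{(\ell)}_{i,j}}=\sum_t \delta_j^{(\ell)}(t)\,e_i^{(\ell-1)}(t),
\]
with an eligibility trace $e_i(t)=\lambda\sum_{t'<t}(1-\lambda)^{t-1-t'}s_i(t')$, which vanishes whenever neuron $i$ has not fired.

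The backpropagated error $\delta_j(t)$ contains the factor $\varphi'(\cdot)$, bounded by $c_\varphi$ under Assumption~\ref{assump:snn_bounds}. It also involves products of weight matrices and the leak factor $(1-\lambda)<1$. Under the ``standard boundedness conditions'' I would impose $\|W^{(\ell)}\|\le B_W$ and a bounded loss derivative at the output. This gives $|\delta_j(t)|\le \bar\delta$, uniformly in $j$ and $t$, for a constant depending on $c_\varphi$, $B_W$, the depth and $T$.

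\textbf{Step 3 (sparsity to $\sqrt{\rho}$).} Squaring and summing the Step 2 expression over $(i,j)$, then applying Cauchy--Schwarz in $t$, gives
\[
\|\nabla_{W^{(\ell)}}\tilde{F}_k\|_2^2\le T\bar\delta^2 d_\ell\sum_t\|\boldsymbol{e}^{(\ell-1)}(t)\|_2^2.
\]
Since $0\le e_i\le 1$, we have $\|\boldsymbol{e}(t)\|_2^2\le\sum_{t'\le t}\|\boldsymbol{s}(t')\|_0$. Taking expectations and invoking $\mathbb{E}\|\boldsymbol{s}^{(\ell)}(t)\|_0\le\rho d_\ell$ bounds this by $T^2\rho\, d_{\ell-1}$.

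Summing over layers gives $\mathbb{E}\|\nabla\tilde{F}_k\|_2^2\le C^2\rho$, with $C^2$ collecting $T^3$, $\bar\delta^2$ and $\sum_\ell d_\ell d_{\ell-1}$. The first-layer weights multiply the raw EEG input, which is not sparse. Their gradient must instead be controlled through sparsity of the error side: $\varphi'$ is small far from threshold, which is tied to silent neurons. Alternatively, one can absorb that gradient into $C$ via the bounded-signal assumption, at the cost of only an additive, $\rho$-independent term that I would need to argue is dominated.

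\textbf{Main obstacle.} The delicate point is Step 3. The assumption gives sparsity only in expectation, so $G(\rho)$ must be interpreted as an expected bound rather than an almost-sure one. I would restate the bound in $\mathbb{E}$ and use Jensen's inequality to pass this to the gradient of $\tilde{F}_k$, which is itself an expectation over the data. I also need to handle the non-sparse input layer cleanly. If that cannot be made $\rho$-proportional, I would state the result for the spiking (hidden) layers' gradient block, which is what the proposition's qualitative claim actually requires. Substituting $G(\rho)\le C\sqrt{\rho}$ into Step 1 then gives \eqref{eq:drift_bound_rho}.
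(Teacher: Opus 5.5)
Your proposal is essentially the paper's argument, made more explicit.

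For \eqref{eq:drift_bound_general}, the paper uses $\|a-b\|_2^2\le 2\|a\|_2^2+2\|b\|_2^2$ together with Jensen on $\|\sum_k p_k\nabla\tilde F_k(\boldsymbol{w})\|_2^2$. You use the triangle inequality instead, and both reach the same $4G(\rho)^2$. Your variance identity $\Delta(\boldsymbol{w})=\sum_k p_k\|\nabla\tilde F_k(\boldsymbol{w})\|_2^2-\|\nabla\tilde F(\boldsymbol{w})\|_2^2\le G(\rho)^2$ is correct and shows the paper's constant is loose by a factor of four.

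For the $\sqrt{\rho}$ scaling, the paper uses the same two ingredients you do: back-propagated errors bounded through $c_\varphi$ and bounded weights, and sparse presynaptic spikes. It works only with the first-moment estimate $\mathbb{E}\|\boldsymbol{s}^{(\ell)}(t)\|_2\le\sqrt{\mathbb{E}\|\boldsymbol{s}^{(\ell)}(t)\|_0}\le\sqrt{\rho d_\ell}$ and then simply asserts $\|\nabla\tilde F_k(\boldsymbol{w})\|_2\le C\sqrt{\rho}$. Your eligibility-trace and Cauchy--Schwarz computation of the second moment is a more careful version of that step. Combined with Jensen, $\|\nabla\tilde F_k(\boldsymbol{w})\|_2^2\le\mathbb{E}_\xi\|\boldsymbol{g}_k(\boldsymbol{w};\xi)\|_2^2$ (using the unbiasedness in Assumption~\ref{assump:variance}), it already gives a deterministic bound on $\|\nabla\tilde F_k(\boldsymbol{w})\|_2$. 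So the ``main obstacle'' dissolves, and you do not need to restate the result in expectation.

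On the input layer, your concern is legitimate, but the paper never separates that layer out. It applies the sparse-presynaptic bound to every weighted layer, implicitly reading Assumption~\ref{assump:snn_bounds} as covering each layer's input and folding everything else into ``standard boundedness conditions.'' To obtain the statement as written, adopt that same reading.

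Your fallbacks do not recover the statement:
\begin{itemize}
\item The arctan surrogate derivative in \eqref{ada} is strictly positive everywhere, and Assumption~\ref{assump:snn_bounds} limits only the number of spikes, not how far silent neurons sit below threshold. So the error side supplies no factor of $\rho$.
\item An additive $\rho$-independent first-layer term gives only $G(\rho)\le C_0+C\sqrt{\rho}$, which does not imply \eqref{eq:drift_bound_rho}.
\item Restricting to the hidden-layer gradient block proves a weaker statement than the one claimed.
\end{itemize}
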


\begin{proof}
We first prove \eqref{eq:drift_bound_general}.
Using $\|a-b\|^2 \le 2\|a\|^2 + 2\|b\|^2$ and Jensen's inequality, \eqref{eq:drift_def} can be recast as:
\begin{align}
\Delta(\boldsymbol{w})
&= \sum_{k} p_k \|\nabla \tilde{F}_k(\boldsymbol{w}) - \nabla \tilde{F}(\boldsymbol{w})\|_2^2 \nonumber\\
&\le 2\sum_k p_k \|\nabla \tilde{F}_k(\boldsymbol{w})\|_2^2 + 2\|\nabla \tilde{F}(\boldsymbol{w})\|_2^2 \nonumber\\
&= 2\sum_k p_k \|\nabla \tilde{F}_k(\boldsymbol{w})\|_2^2 + 2\Big\|\sum_k p_k \nabla \tilde{F}_k(\boldsymbol{w})\Big\|_2^2 \nonumber\\
&\le 2\sum_k p_k G(\rho)^2 + 2\sum_k p_k \|\nabla \tilde{F}_k(\boldsymbol{w})\|_2^2 \nonumber\\
&\le 4G(\rho)^2,
\end{align}
which proves \eqref{eq:drift_bound_general}.

To obtain the $\rho$-scaling, we bound the surrogate gradient norm using Assumption~\ref{assump:snn_bounds}.
In surrogate-gradient BPTT, the gradient with respect to synaptic weights is a sum of products between (i) a back-propagated error term and
(ii) presynaptic spike activities, i.e., \eqref{eq.bp}.
By the bounded surrogate derivative $|\varphi'(\cdot)|\le c_\varphi$ and standard boundedness of network weights signals,
the back-propagated error terms remain uniformly bounded.
Meanwhile, the presynaptic spike vectors are sparse:
$\mathbb{E}\|\boldsymbol{s}^{(\ell)}(t)\|_2 \le \sqrt{\mathbb{E}\|\boldsymbol{s}^{(\ell)}(t)\|_0}
\le \sqrt{\rho d_\ell}$.
Therefore, the overall surrogate gradient satisfies $\|\nabla \tilde{F}_k(\boldsymbol{w})\|_2 \le C\sqrt{\rho}$ for some constant $C>0$.
Substituting $G(\rho)\le C\sqrt{\rho}$ into \eqref{eq:drift_bound_general} yields \eqref{eq:drift_bound_rho}, which completes the proof.
\vspace{-8pt}
\end{proof}

\noindent\textbf{Discussion.}
The convergence analysis in Theorem~\ref{thm:convergence_sur} guarantees that the surrogate-based proximal-PFL procedure converges to a stationary point of the surrogate envelope.
Proposition~\ref{prop:drift} further indicates that sparse spiking activity (small $\rho$) can tighten an upper bound on gradient dissimilarity and hence mitigate user drift.
Since user drift is a major factor behind instability and slow convergence under non-IID data~\cite{karimireddy2020scaffold},
this provides theoretical support for why SNN-based PFL can potentially improve robustness in neurodiverse BCI-enabled immersive communication systems.


\subsubsection{Energy efficiency enabled by sparse spiking activity}\label{subsec:benefits_energy}
Beyond robustness, spiking sparsity directly translates into reduced computation. Together with the AC operation described in Section~\ref{algorithm}, sparse firing (i.e., small $\rho$) reduces the number of effective synaptic events per time step, which scales with $\rho$. This directly translates into lower computation and energy consumption on resource-limited BCI wearables.
In our experiments, the average spiking activity is $\rho \approx 0.12$, meaning that only about $12\%$ of neurons emit spikes at a given time step, which substantially reduces the number of effective synaptic events compared with dense ANN activations.
Therefore, SNN-based PFL offers a principled way to jointly address neurodiversity-induced training difficulty and long-term energy constraints in 6G immersive communication systems.

\vspace{-5pt}
\section{Performance Evaluations} \label{PE}
\vspace{-5pt}
\subsection{Experimental Settings}\label{simulation setting}
\subsubsection{BCI data statement}
\vspace{-2pt}

We use the public EEG dataset~\cite{goldberger2000physiobank} to simulate brain-signal processing in the BCI-enabled immersive communication framework. The dataset contains EEG recordings from 109 participants under controlled motor-imagery paradigms, where each subject is instructed to perform or imagine a specific action according to visual cues on a screen. We select participants 1, 3, and 7 as federated clients and formulate EEG-based user-intent identification via 4-class motor imagery (left/right fist, both fists, both feet) for immersive interaction, more detailed can be found in ~\cite{goldberger2000physiobank,shang2024biologically}.


\textit{2) Baselines and Algorithm Parameters:}
We compare the proposed method, denoted as PFLSNN, with three baselines: (i) PFL with ANN (PFLANN), (ii) FL with SNN (FLSNN), and (iii) FL with ANN (FLANN). For a fair comparison, all methods adopt the same CNN architecture (i.e., identical layer widths and input/output dimensions), where the only differences lie in the learning paradigm (FL vs.\ PFL) and the backbone implementation (SNN vs.\ ANN). In particular, the SNN model follows the same convolutional feature extractor and classifier dimensionality as the ANN counterpart, while using LIF neurons and temporal spike accumulation. Unless otherwise specified, the learning hyper-parameters are set to: learning rate $0.01$, batch size $64$, local epochs $E=2$, and proximal regularization coefficient $\mu=10^{-5}$. Moreover, the SNN parameters $\{T,\lambda,\eta\}$ are set to $\{6,0.5,2\}$, respectively.

\vspace{-2pt}
\subsection{Experimental Results}
\vspace{-2pt}
\subsubsection{Algorithm Performance on Brain Signal Identification Accuracy}
As shown in Fig.~\ref{fig:accuracy}, the proposed PFLSNN exhibits the most favorable learning performance and the highest identification accuracy, reaching 87.53\% at round 50, which consistently stays above the other baselines throughout training. In comparison, PFLANN converges to 81.90\% (round 48), while FLSNN and FLANN plateau at 68.78\% (round 48) and 62.12\% (round 48), respectively.  
We further quantify the gains brought by personalization and by the spiking architecture at convergence. At the final round, switching from FL to PFL yields a large accuracy improvement for both backbones. PFLSNN improves over FLSNN by +18.75\%, and PFLANN improves over FLANN by +19.78\%, confirming that user-specific adaptation is crucial in neurodiverse EEG decoding. Moreover, SNNs consistently outperform ANNs under the same federated paradigm: under PFL, PFLSNN exceeds PFLANN by +5.63\%, and under FL, FLSNN exceeds FLANN by +6.66\%. 
These results highlight two trends: (i) \emph{personalization} mitigates the neurodiversity-induced non-IID distribution shift by allowing each user to adapt its local decoder, and (ii) the \emph{spiking backbone} captures temporal EEG dynamics with sparse event-driven activations, yielding more stable updates in heterogeneous training.

\subsubsection{Energy Consumption}
\begin{figure}[t]
    \centering
    \includegraphics[width=0.60\linewidth]{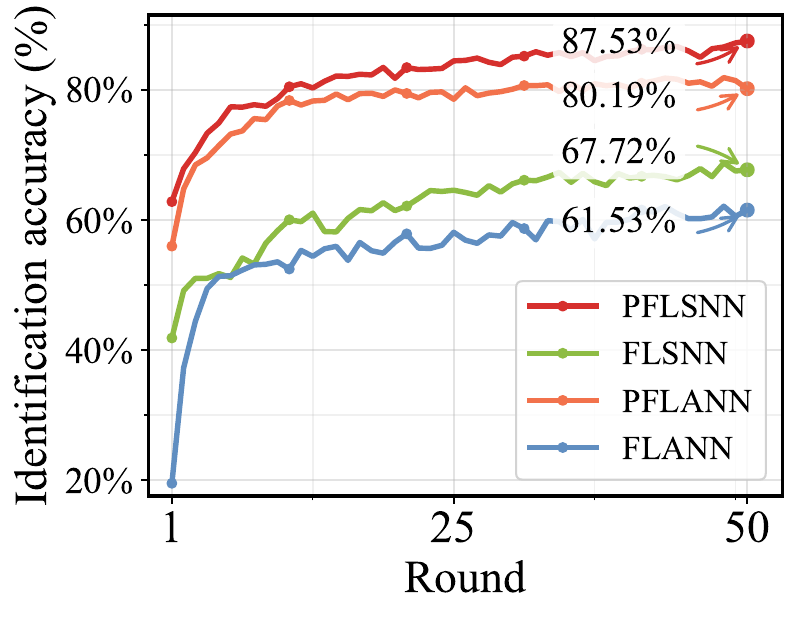}
    \caption{Brain signal identification accuracy vs. training round for
various methods.}
    \label{fig:accuracy}
\end{figure}
\begin{figure}[t]
    \centering
    \includegraphics[width=0.60\linewidth]{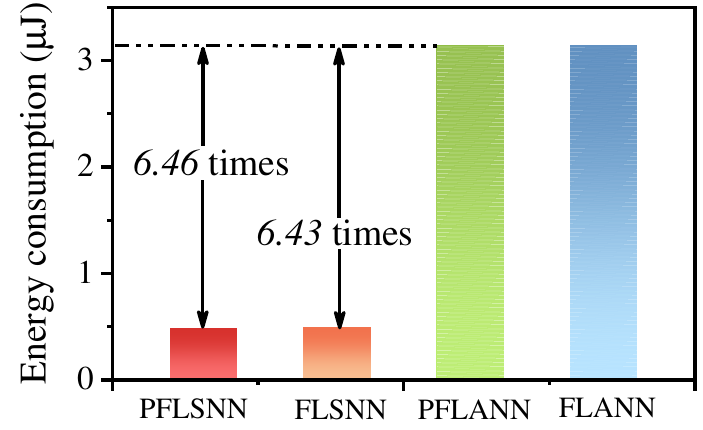}
    \caption{ Energy consumption of various methods during model inference.}
    \label{fig:energy}
    \vspace{-5pt}
\end{figure}
We then evaluate the inference energy consumption of different algorithms. Following prior works~\cite{shang2025energy,shang2024biologically,panda2020toward,9583900}, we quantify the compute energy based on synaptic operations and spiking activity. Specifically, we measure the average firing rate (spiking activity) $\rho$ of each LIF layer in the SNN, which equals $0.106$, $0.063$, $0.058$, and $0.253$, respectively. As illustrated in Fig.~\ref{fig:energy}, the results confirm the energy advantage of spiking computation: under the same network architecture, PFLSNN reduces inference energy by about $6.46\times$ compared with its ANN-based personalized counterpart (i.e., PFLANN). Taken together, experiments on real EEG data validate that the proposed SNN-enabled personalized FL not only achieves higher identification accuracy, but also substantially lowers compute energy, supporting its suitability for continuous on-device adaptation in immersive communication systems.


\section{Conclusion}\label{conclusion}
In this work, we have proposed a BCI-enabled personalized immersive communication framework that leverages brain signals to infer user-centric states, enabling more personalized and robust immersive adaptation under strong individual variability. To accommodate neurodiverse brain-signal data while preventing leakage of sensitive information, we have formulated brain-signal learning and inference under a PFL paradigm. To address the energy bottleneck of continual on-device learning and inference, we have embedded SNNs into the PFL procedure to exploit sparse, event-driven computation. Experiments on real brain-signal dataset have shown that the proposed method achieves the best overall identification accuracy while reducing inference energy by $6.46\times$ over ANN-based personalized baselines, highlighting its potential for long-term, always-on immersive communication systems.

\vspace{-8pt}
\bibliographystyle{IEEEtran}
\bibliography{bibRef}
\vspace{-15pt}
\end{document}